\newtheorem{proposition}{Proposition}
\renewcommand{\phi}{\varphi}
\definecolor{lgray}{gray}{0.95}
\definecolor{lblue}{rgb}{0.90,0.90,1.00}
\definecolor{lyellow}{rgb}{1.00,1.00,0.70}
\newtheorem{prop}{Proposition}
\newtheorem{ex}{Example}
\newcommand{\BI}[0]{\begin{itemize}}
\newcommand{\EI}[0]{\end{itemize}}
\newcommand{\I}[0]{\item}
\newcommand{\BE}[0]{\begin{enumerate}}
\newcommand{\EE}[0]{\end{enumerate}}
\newcommand{\BX}[0]{\begin{ex}}
\newcommand{\EX}[0]{\end{ex}}
\newcommand{\BP}[0]{\begin{prop}}
\newcommand{\EP}[0]{\end{prop}}
\newcommand{\BEQ}{\begin{equation}}
\newcommand{\EEQ}{\end{equation}}
\newcommand{\BC}[0]{\begin{center}}
\newcommand{\EC}[0]{\end{center}}
\newcommand{\BF}[0]{\begin{filecontents*}{data.csv}}
\newcommand{\BQ}[0]{\color{blue}\begin{quote}}
\newcommand{\EQ}[0]{\end{quote}\color{black}}
\def \bscale1 {0.25}
\def \bscale {0.25}
\begin{document}


\title[Modeling Next-Token Prediction as Left-Nested Intuitionistic Implication]
{Modeling Next-Token Prediction as Left-Nested Intuitionistic Implication}

\begin{authgrp}
\author{\sn{Paul} \gn{Tarau}}
\affiliation{University of North Texas}
\end{authgrp}


\lefttitle{Paul Tarau}

\jnlPage{x}{xx}
\jnlDoiYr{xxxx}
\doival{10.1017/xxxxx}

\maketitle

\begin{abstract}
We introduce the \emph{Arrow Language Model}, a neural architecture derived from an intuitionistic-logic interpretation of next-token prediction. Instead of representing tokens as additive embeddings mixed by attention, we encode a prefix as a \emph{left-nested implication chain} whose structure preserves order through non-commutative composition. Next-token prediction corresponds to \emph{modus ponens}, and sequence processing becomes constructive proof extension under the Curry--Howard correspondence. Our Prolog-based specialized theorem provers validate  fundamental properties of the neural models, among which relations between commutative vs. non-commutative sequencing  and single-token vs. multi-token prediction choices.
We show that a neural architecture equivalent to multiplicative RNNs
arises naturally from a proof-theoretic interpretation of next-token
prediction as nested intuitionistic implication,
we present a practical low-rank neural realization
and position the model relative to Transformers and state-space models.

{\bf Keywords:}
logic-based derivation of neural architectures, intuitionistic implicational logic,
token-as-operator neural models, state-space models,
alternatives to transformer-based foundational models.
\end{abstract}

\section{Introduction}

Foundational Generative AI models involve  mechanisms like next token(s) prediction
that naturally benefit
from unsupervised training on very large data sets
at a scale closely matching the total multi-modal content of the internet.

The resulting distribution, encapsulated in the parametric memory of
the model, {\em constructively} completes the initial sequence
derived from the prompt and subsequently generated tokens with
a set of probability-weighted next token candidates.

The underlying logic of the derivation of the next token
hints to an intuitionistic implicational logic  proof of  $q$ seen as an implication from assumptions in context $E$,
$E \rightarrow q$.
This is can be stated as
{\em finding a way to transform a proof of an assumption context $E$ into a proof of
    a consequence $q$}.
In the neural world this materializes as
a transformation of the overall statistical evidence for
$E$ into a predicted probability for $q$.

In the case of the transformer models \cite{transfo},
order information is learned via dedicated positional encodings
that complement the parallel attention heads that learn
deep dependencies relating to the ``past'' token sets
while having the set of ``future tokens'' masked.
 This is a highly effective approach, but it obscures the logical structure of sequence prediction and provides little explanatory connection to symbolic reasoning.

We will explore here an alternative view: next-token prediction as implication completion. We ask whether the task of predicting the next word in a sequence can be understood as extending a proof in intuitionistic propositional logic—and whether this interpretation leads naturally to a viable neural architecture.

Our answer is affirmative. Starting from a symbolic encoding of sequences as left-nested intuitionistic implications, we derive a recurrent neural model in which each token acts as an operator transforming a proof state. The resulting architecture, which we call the {\em Arrow Language Model}, replaces similarity-based attention with non-commutative operator composition as the carrier of both semantic content and order.

Our logic-based insights will also be extended to cover multi-token prediction
and it naturally relates to type inference and type inhabitation via the Curry-Howard isomorphism.

The rest of the paper is organized as follows.
Section \ref{tools} introduces our theorem provers for propositional implicational intuitionistic logic.
Section \ref{intu} justifies our choice of the logic formalism modeling content and order
in the next token prediction.
Section \ref{theo} illustrates instances of valid formulas relating left-nested implicational sequences,
relevant  for the next word(s) prediction task.
Section \ref{info} introduces an information retrieval mechanism by using implicational logic subformulas
derived from a corpus split into sentences.
Section \ref{neural} describes the a neural realization of the derived {\em Arrow Architecture}.
Section \ref{exper} describes our experiments involving validation of our logic-based algorithms and their neural equivalents.
Section \ref{disc} discusses limitations and future work directions.
Section \ref{rel} overviews related work.
Section \ref{conc} concludes the paper.

Our full open-source code is available at \url{https://github.com/ptarau/nextword} .

The literate Prolog code extracted from this paper is at \url{https://github.com/ptarau/nextword/blob/main/nextword.pro} .

\section{Our tools :  implicational intuitionistic logic theorem provers}\label{tools}

We will start by motivating  our choice of a purely symbolic logic representation that does  not involve reliance on numerical implementations of ordering.

Implicational propositional logic allows reasoning, when seen as
a Hilbert system with modus-ponens as its only inference rule:
\[
\begin{gathered}
    p \to q \\
    p \\
    \hline
    \therefore q
\end{gathered}
\]
and with as little as  two axioms corresponding via the Curry-Howard isomorphism
to combinators K and S:
\begin{codex}
    K combinator:
    p->q->p

    S combinator:
    (p->q->r)->(p->q)->p->r
\end{codex}

For the same formalism, when seen via the sequent calculus, simple and efficient theorem provers
become available.

We will introduce them next as they will be used to support our logical inference mechanisms.

\noindent
The predicate {\tt iprove} implements Roy Dyckhoff's {\bf LJT} calculus, restricted to the implicational fragment of propositional intuitionistic logic, proven as sound and complete in \cite{dy1}.
\vskip 0.5cm

{\large
    \noindent
    \begin{math}
        LJT_1:~~~~\frac{~}{A,\Gamma ~\vdash~ A}\\\\
        LJT_2:~~~~\frac{A,\Gamma ~\vdash~ B}{\Gamma ~\vdash~ A\rightarrow B}\\\\
        LJT_3:~~~~\frac{B,A,\Gamma ~\vdash~ G}{A \rightarrow B,A,\Gamma ~\vdash~ G}\\\\ 
        LJT_4:~~~~\frac{D \rightarrow B,\Gamma ~\vdash~ C \rightarrow D ~~~~ B,\Gamma ~\vdash~ G}
        { \left( C \rightarrow D \right) \rightarrow B,\Gamma ~\vdash~ G }\\
    \end{math}
}
Termination is ensured as one can identify a multiset
ordering-based size definition that decreases after each step \cite{dy1}.
~\\

\begin{code}
iprove(T):-iprove(T,[]).

iprove(A,Vs):-memberchk(A,Vs),!.
iprove((A->B),Vs):-!,iprove(B,[A|Vs]).
iprove(G,Vs1):-
    select((A->B),Vs1,Vs2),
    iprove_imp(A,B,Vs2),
    !,
    iprove(G,[B|Vs2]).

iprove_imp((C->D),B,Vs):-!,iprove((C->D),[(D->B)|Vs]).
iprove_imp(A,_,Vs):-memberchk(A,Vs).
\end{code}

The predicate {\tt lprove} adds generation of a proof term by introducing
lambda terms {\tt l(X,E)} with lambda variable {\tt X} and expression {\tt E},
and {\tt a(A,B)} representing application of a source {\tt A} to  target {\tt B}.

\begin{code}
lprove(T,X):-lprove(X,T,[]),!.

lprove(X,A,Vs):-memberchk(X:A,Vs),!. 
lprove(l(X,E),(A->B),Vs):-!,lprove(E,B,[X:A|Vs]).  
    lprove(E,G,Vs1):-
    select(S:(A->B),Vs1,Vs2),       
    lprove_imp(T,A,B,Vs2),          
    !,
    lprove(E,G,[a(S,T):B|Vs2]).     

lprove_imp(l(X,E),(C->D),B,Vs):-!,lprove(E,(C->D),[X:(D->B)|Vs]).
lprove_imp(E,A,_,Vs):-memberchk(E:A,Vs).
\end{code}

\section{Content and order in propositional intuitionistic implicational logic}\label{intu}

Next we will express the ``next token prediction'' as logical inference in implicational intuitionistic logic.
Thus, we would like, from an implicational representation of ``the cat'' and a representation of ``the cat sits'' , to infer the next token ``sits'', by just using {\em modus ponens}. We would also like
the inference steps to be directly verifiable by our theorem provers.

\subsection{Order Preservation by Left-Nested Implication}

Each implication introduces a higher-order dependency on the entire previous structure. Any permutation alters the functional type of the antecedent, changing the set of inhabitants.
Let $w_1,\dots,w_n$ be atomic propositions and define
\[
L_n = ((((w_1 \to w_2) \to w_3) \to \dots) \to w_n).
\]
Then $L_n$ is not invariant under permutation of the $w_i$ in intuitionistic logic.

\BX Counterexample for order invariance in left-nested implication chain.
\begin{codex}
?- iprove(((p->q)->r) -> ((q->p)->r)).
false.
\end{codex}
\EX

By contrast, right-nested implication $R_n$ collapses to a conjunction-to-result form $C_n$
and is $w_1 \ldots w_{n-1}$ permutation-invariant given that conjunction is commutative.

\[
R_n = w_1 \to w_2 \to w_3 \to \dots \to w_{n-1} \to w_n.
\]

\[
C_n = w_1 \land w_2 \land w_3 \land \dots \land w_{n-1} \to w_n.
\]

\BX Order invariance for right-nested implication:
\begin{codex}
  ?- iprove((p->q->r) -> (q->p->r)).
  true.
\end{codex}
\EX

\BX
Given the left-nested implicational representation of the tokens in a sentence
and its infixes ensure inference of the next word by applying just modus ponens:

\begin{codex}
((the->cat)->sits)
(the->cat)
------------------ modus ponens
sits
\end{codex}
\noindent
or equivalently, by reflecting {\em modus ponens} as an implication

\begin{codex}
?- iprove(((the->cat)->sits) -> (the->cat) -> sits).
true.
\end{codex}
\EX

Note that expressing {\em modus ponens}
in terms of implications
 makes it testable with a purely
implicational prover, given the equivalence between
$(p \land (p \to q)) \to q$ and $p \to (p \to q) \to r$.

\subsection{From sequences to implication chains}

As our tokenized input is expressed as a Prolog list of words
we will need to convert them as left-nested implications.

\begin{code}
list2impl([],[]).
list2impl([X],X).
list2impl([X,Y|Xs],R):-seq2left_heavy([Y|Xs],X,R).

seq2left_heavy([],End,End).
seq2left_heavy([X|Xs],Chain,End):-seq2left_heavy(Xs,(Chain->X),End).
\end{code}

\BX
\begin{codex}
?- list2impl([the,cat,sits,on,the,map],R).
R = (((((the->cat)->sits)->on)->the)->map).
\end{codex}
\EX

\subsection{Implication Depth and Proof Steps}\label{depth}

We now make explicit the correspondence between symbolic implication depth and  depth of recurrence.

\begin{proposition}[Implication Depth Corresponds to Recurrence Steps]
    Let $L_n = ((((w_1 \to w_2) \to \dots) \to w_n)$ be a left-nested implication chain.
    Then $L_n$ requires exactly $n-1$ successive implication introductions and eliminations.
\end{proposition}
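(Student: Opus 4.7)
The plan is to argue by induction on $n$, exploiting the immediate structural recurrence $L_n = L_{n-1} \to w_n$. For the base case $n = 1$ we have $L_1 = w_1$, which is atomic and consumes zero implication rules, matching $n-1 = 0$. The inductive hypothesis assumes that any LJT derivation of $L_{n-1}$ applies the implicational rules exactly $n-2$ times.

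For the introduction direction, a derivation of $\,\vdash L_n\,$ must begin with $LJT_2$ (this is the only rule whose conclusion has an implication on the right with an empty antecedent context), reducing the goal to $L_{n-1} \vdash w_n$. By the induction hypothesis, completing this subderivation contributes $n-2$ further implication steps, for a total of $n-1$. For the elimination direction, viewing the chain as an operator to be applied by successive modus ponens, we match $L_n = L_{n-1} \to w_n$ against an available proof of $L_{n-1}$ to obtain $w_n$; producing that premise $L_{n-1}$ requires, by induction, exactly $n-2$ additional eliminations, again giving $n-1$. Both directions thus mirror the $n-1$ recurrent composition steps that will later be needed in the neural realization.

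The main obstacle is making ``requires exactly'' precise rather than merely ``admits a derivation of length $n-1$.'' The upper bound is given by the explicit constructions above, but the lower bound must rule out shorter proofs. Here I would invoke the multiset size measure used by Dyckhoff in the termination argument cited in Section~\ref{tools}: every application of $LJT_2$, $LJT_3$, or $LJT_4$ consumes exactly one implication occurrence, and $L_n$ contains exactly $n-1$ implication connectives as a syntactic tree. Since implicational rules are the only way to eliminate these occurrences on the way to an atomic axiom $LJT_1$, no derivation can use fewer than $n-1$ such steps.

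A secondary subtlety is that $LJT_4$ is a branching rule, which could in principle inflate the count. I would handle this by observing that on the left-nested shape $L_n$ the active principal formula is always atomic on the right or a plain $A \to B$ on the left, so $LJT_4$ is never triggered by the chain itself; branching can only arise from hypothesis interactions, which do not contribute additional arrows from $L_n$ to consume. This confines the accounting to a linear chain of $LJT_2$ / $LJT_3$ steps and closes the equality between implication depth and recurrence depth claimed by the proposition.
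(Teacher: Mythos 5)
Your proposal diverges from the paper's proof and, in its attempt to be more rigorous, introduces steps that fail. The paper's argument is a short structural one: each nesting level of $L_n$ contributes exactly one implication whose antecedent is the \emph{entire} previous chain, so forming or eliminating that implication costs exactly one modus ponens, and since every antecedent depends on the whole prefix the $n-1$ steps cannot be parallelized. It counts constructive modus-ponens steps, not sequent-calculus rule applications. Your reformulation as an exact count of LJT rule applications is a genuinely different route, but it has a fatal flaw at the outset: for generic distinct atoms, $L_n$ is \emph{not} an intuitionistic theorem (the paper itself shows \texttt{iprove(((p->q)->r) -> ((q->p)->r))} failing, and already $\vdash (p\to q)\to r$ is unprovable), so ``a derivation of $\,\vdash L_n$'' that you propose to begin with $LJT_2$ does not exist, and your induction hypothesis --- which counts steps in derivations of $L_{n-1}$ \emph{as a goal} --- does not match the subgoal $L_{n-1} \vdash w_n$ you generate, where $L_{n-1}$ sits on the left of the turnstile. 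The same mismatch infects your elimination direction: in the intended reading the prefix $L_{n-1}$ is \emph{given} (as a stored assumption), so one modus ponens yields $w_n$; the $n-1$ count arises from processing the chain level by level, not from recursively ``producing'' $L_{n-1}$ by eliminations.

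Two of your technical supporting claims are also incorrect. First, you assert that $LJT_4$ is never triggered on the chain because the active formula is ``a plain $A \to B$ on the left''; this is exactly backwards. A left-nested chain of depth at least $2$ appearing as a hypothesis has precisely the shape $(C \to D) \to B$ that is the principal formula of $LJT_4$ --- that rule exists specifically to handle the nested-antecedent pattern that characterizes $L_n$. Second, your lower-bound accounting (``every application of $LJT_2$, $LJT_3$, or $LJT_4$ consumes exactly one implication occurrence'') fails for $LJT_4$: its left premise contains both the new hypothesis $D \to B$ and the new goal $C \to D$, so the number of implication occurrences can grow across a step. Dyckhoff's termination argument, which you invoke, relies on a multiset/weight ordering precisely because a naive arrow count does not decrease monotonically; hence your ``no shorter proof exists'' conclusion does not follow from it. To repair the proposal you would either need to retreat to the paper's informal structural count (one modus ponens per nesting level, sequentially forced because each antecedent is the whole prefix), or set up the formal statement differently --- e.g., count eliminations in the derivation of $w_n$ from the assumptions $\{L_1 \to w_2,\dots,L_{n-1}\to w_n\}$ together with $w_1$, where the $n-1$ count and the strict sequential dependency can be made precise.
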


\begin{proof}
    Each nesting level introduces one implication whose antecedent is the entire previous chain.
    Constructively, forming or eliminating this implication requires one application of modus ponens.
    Thus, evaluating or extending $L_n$ proceeds stepwise, with no parallel elimination of implications.
    This enforces a strict sequential dependency structure of depth $n-1$.
\end{proof}

\subsection{Next-Token Prediction as Modus Ponens}

Given prefix implication $I_p$ and full implication $I_f$,
\[
I_f = (I_p \to w)
\]
producing $w$ is an instance of modus ponens:
\[
I_f \to (I_p \to I_f) \to w.
\]
Thus, next-token prediction  corresponds to constructive proof completion in intuitionistic implicational logic.

\subsection{Curry--Howard Interpretation}

Under the Curry--Howard isomorphism, propositions correspond to types,
and proofs correspond to $\lambda$-terms inhabiting those types.
From this perspective, next-token prediction can be viewed as the
construction of a term inhabiting an implicational type determined by
the prefix.

\begin{center}
    \begin{tabular}{lll}
        \toprule
        Intuitionistic Implicational Logic
        & $\lambda$-Calculus (Curry--Howard)
        & Language Modeling \\
        \midrule
        Proposition
        & Type of a $\lambda$-term
        & Token as operator \\

        Implication $A \to B$
        & Function type $A \to B$
        & State transition operator \\

        Proof of $A \to B$
        & $\lambda$-abstraction
        & Learned token operator \\

        Proof term
        & $\lambda$-term
        & Prefix representation \\ 

        Modus ponens
        & Function application
        & Next-token inference \\

        Normalization of proofs
        & $\beta$-reduction
        & Recurrent state update \\
        \bottomrule
    \end{tabular}
\end{center}

In this interpretation, a sentence prefix corresponds to a partially
applied $\lambda$-term whose type is a left-nested implication.
Predicting the next token amounts to supplying an argument that enables
a further function application, i.e., an instance of modus ponens.
The corresponding neural architecture can thus be seen as learning a numeric
realization of proof construction, where recurrent state updates
correspond to successive $\beta$-reductions in the associated
$\lambda$-term.


\section{Theorems about single next word and multi-word predictions}\label{theo}

We will illustrate our intuitionistic axioms and theorems as an implicational formula, followed by its tree representation and its proof term as an expression
in $\lambda$-calculus .
\subsection{Axioms}

\BX K combinator:

\noindent \verb|p->q->p|\Tree [.$\rightarrow$ [.p ] [.$\rightarrow$ [.q ] [.p ]  ]  ]
$ \lambda X. \lambda Y.X $
\EX

\BX S combinator:

\noindent  \verb|(p->q->r)->(p->q)->p->r|\Tree [.$\rightarrow$ [.$\rightarrow$ [.p ] [.$\rightarrow$ [.q ] [.r ]  ]  ] [.$\rightarrow$ [.$\rightarrow$ [.p ] [.q ]  ] [.$\rightarrow$ [.p ] [.r ]  ]  ]  ]
$ \lambda X. \lambda Y. \lambda Z.((X ~Z )~(Y ~Z ))$
\EX

\BX Modus Ponens as implicational formula:

\noindent  \verb|p -> (p->q) -> q|\Tree [.$\rightarrow$ [.p ] [.$\rightarrow$ [.$\rightarrow$ [.p ] [.q ]  ] [.q ]  ]  ]
$ \lambda X. \lambda Y.(Y ~X )$
\EX

\subsection{Theorems and their relevance for single- and multi-word prediction}
\label{sec:theorems-relevance}

We present a small collection of valid implicational instances (verified with \texttt{iprove})
that clarify how a prefix can support both \emph{single next-word inference} and \emph{multi-word
    completion}. The unifying idea is that left-nested implication chains naturally induce
\emph{continuations}: formulas of the form $A \to B$ that, under Curry--Howard, correspond to
functions mapping a proof of $A$ into a proof of $B$. In the language-modeling reading, such
continuations suggest how one may construct a mechanism that extends a prefix one token at a time.

\BX Left nested implies right nested implication chain:

\noindent \verb|((p->q)->r) -> (p->q->r)|
\Tree [.$\rightarrow$ [.$\rightarrow$ [.$\rightarrow$ [.p ] [.q ]  ] [.r ]  ] [.$\rightarrow$ [.p ] [.$\rightarrow$ [.q ] [.r ]  ]  ]  ]
$ \lambda X.\lambda Y.\lambda Z.\big(X~(\lambda U.\lambda V.Z)\big)$
\EX

\paragraph{Relevance.}
This theorem expresses a \emph{currying} principle: a context that consumes a composite implication
$(p\!\to\!q)$ to produce $r$ can be transformed into a predictor that consumes $p$ and then $q$ to
produce $r$.
For prediction, this captures a basic bridge between two views:
\begin{itemize}
    \item \emph{single-step view:} the prefix determines a rule taking a composite premise to the next
    token,
    \item \emph{incremental view:} the same information can be reorganized into a step-by-step process
    consuming premises one at a time.
\end{itemize}
Thus, even though the left-nested encoding is order-sensitive, it still supports incremental
generation by exposing a chain of smaller continuations.

\BX Sequence implies Block to Block:

\noindent  \verb|(((p->q)->a)->b) -> (p->q)->(a->b)|
\Tree [.$\rightarrow$ [.$\rightarrow$ [.$\rightarrow$ [.$\rightarrow$ [.p ] [.q ]  ] [.a ]  ] [.b ]  ] [.$\rightarrow$ [.$\rightarrow$ [.p ] [.q ]  ] [.$\rightarrow$ [.a ] [.b ]  ]  ]  ]
$ \lambda X. \lambda Y. \lambda Z.(X ~ \lambda U. \lambda V.Z )$
\EX

\paragraph{Relevance.}
This theorem shows how a longer dependency can be refactored into a \emph{block-to-block}
continuation.
If, given $(p\!\to\!q)$, the presence of $a$ suffices to obtain $b$, then one can extract a direct
implication $a\!\to\!b$ that is valid under the same contextual constraint $(p\!\to\!q)$.
In prediction terms, this suggests the following decomposition:
the prefix may establish a contextual capability (here represented by $(p\!\to\!q)$), after which
generation can proceed via simpler local rules linking intermediate tokens or phrases.
This provides a logical template for multi-word prediction: the model need not ``re-derive'' the whole
prefix at every step; instead, it may compile the prefix into reusable continuations.

\BX Nested implies simple:

\noindent  \verb|((E->p)->q)->(p->q)|
\Tree [.$\rightarrow$ [.$\rightarrow$ [.$\rightarrow$ [.E ] [.p ]  ] [.q ]  ] [.$\rightarrow$ [.p ] [.q ]  ]  ]
$ \lambda X.\lambda Y.\big(X~(\lambda Z.\lambda U.Y)\big)$
\EX

\paragraph{Relevance.}
Here $E$ stands for an abstract \emph{environment} (or background context) that is held fixed.
The theorem states that if $q$ follows from the assumption $(E\!\to\!p)$, then $q$ also follows from
$p$ alone (with the same ambient $E$).
As a prediction principle, this isolates a common phenomenon:
once a prefix has established a rich background context, deriving the next token may require only a
small local cue $p$ to trigger the appropriate continuation.
In other words, $E$ can be treated as an implicit parameter to the inference, while the explicit
premise $p$ selects the next step.
This theorem suggests how a predictive mechanism could separate \emph{stored context} from
\emph{local triggers}.

\BX Multiple next words :

\noindent  \verb|E -> (((E->a)->b)->c) -> ((a->b)->c)|

\Tree [.$\rightarrow$ [.E ] [.$\rightarrow$ [.$\rightarrow$ [.$\rightarrow$ [.$\rightarrow$ [.E ] [.a ]  ] [.b ]  ] [.c ]  ] [.$\rightarrow$ [.$\rightarrow$ [.a ] [.b ]  ] [.c ]  ]  ]  ]
$ \lambda X.\lambda Y.\lambda Z.\big(Y~(\lambda U.\lambda V.(Z~(V~X)))\big)$
\EX

\paragraph{Relevance.}
This formula provides a schematic link between single-step and multi-step completion.
Given a fixed context $E$ and a ``two-step'' implicational chain from $(E\!\to\!a)$ to $b$ and then to
$c$, we can derive a continuation $(a\!\to\!b)\!\to\!c$ that no longer mentions $E$ explicitly.
Intuitively, $E$ can be ``compiled into'' the continuation rule, leaving a purely sequential
operator chain over the intermediate tokens $a,b,c$.

For multi-word prediction, this suggests an important pattern:
the prefix context can be used to construct a continuation that generates several future tokens by
iterated application.
Such a continuation is precisely the kind of object that, under Curry--Howard, corresponds to a
composable function, indicating how multi-token generation can be framed as repeated implication
elimination.

\paragraph{Summary.}
These theorems collectively motivate the view that next-token prediction can be expressed in the
implicational calculus not only as a single modus-ponens step, but also as the construction of
\emph{continuation operators} that support multi-step completion. This sets the stage for a later
section in which we show how such continuation structure can be realized computationally.

\section{Information retrieval with logic formulas}\label{info}

When our provers work on the sequent calculus representation of the assumptions
needed to prove a theorem, they are represented as a list.
This is needed as the proof procedure progressively reduces
this list of assumptions to a simpler form.
As we can reason exclusively with modus ponens on our
left-nested implication chains, we can replace with a fact database
representation this list representation of our assumptions
(each derived from a sentence on a document).
We will add each of the assumptions represented
as a left-nested implication into the dynamic predicate \verb~isent/1~, as shown in the following example:

\begin{codex}
isent((((((the->cat)->sits))->on)->the)->mat).
...
\end{codex}

To implement an information retrieval mechanism we would want a query matching
a subformula (actually seen as a prefix of a suffix in the corresponding sequence)
as a retriever of the sentence(s) it originates from.

\subsection{Assuming the subformulas that imply the full sequence representation}

We start by extracting the set of prefixes of the suffixes
of a formula, each such fragment corresponding to a sequence of tokens,
that, when used as a query, should retrieve the formula.

\begin{code}
ipref(X,X).
ipref(Xs->_,Ys):-
  ipref(Xs,Ys).
\end{code}

\begin{code}
isuff(X,Y):-isuff0(X,Y).
isuff(X,X).

isuff0(_->X,X).
isuff0(Xs->X,(Ys->X)):-
  isuff0(Xs,Ys).
\end{code}

\begin{code}
isufpref-->isuff,ipref.
iprefsuf-->ipref,isuff.
\end{code}

\proposition{
 If the depth of the left-nested formula is $n$, {\tt isufpref} will generate $(n * (n+1) / 2 $)
 terms.
}

\BX Results of {\tt isufpref} on a small left-extend formula:
\begin{codex}
?- isufpref((((the->little)->cat)->sits),R).
    R = sits ;
    R = (cat->sits) ;
    R = cat ;
    R = ((little->cat)->sits) ;
    R = (little->cat) ;
    R = little ;
    R = (((the->little)->cat)->sits) ;
    R = ((the->little)->cat) ;
    R = (the->little) ;
    R = the.
\end{codex}
\EX

Note that this set is a superset of the set of subformulas as defined in sequent calculus
formalisms.

To implement a basic information retrieval mechanism we would
like each of these fragments to fetch the corresponding sentence
from our database of implicational formulas representing them.
We could implement this in Prolog as assumptions added
to our database-represented set as in:
\begin{codex}
isent((the->cat) -> (the->cat)->sits).
...
\end{codex}
but to avoid the quadratic increase in size, we could just
compute the \verb~Fragment->Formula~ assumptions on the fly,
done efficiently by the predicate {\tt isufpref/2}, guided
by unification with the query formula.

We will do the same when training our neural net in {\tt arrow.py},
generate them on the fly as an in-memory augmentation of the training set:
\begin{codex}
def sufpref(xs: List[str]) -> Iterable[List[str]]:
  n = len(xs)
  for i in range(n):
    for j in range(i + 1, n + 1):
      yield xs[i:j]
\end{codex}

\subsection{Information retrieval with subformulas as queries}

\subsubsection{Creating the database of implicational formulas}
\begin{code}
:-dynamic isent/1.

store_impls(Sents):-
  retractall(isent(_)),
  member(Sent,Sents),
  distinct(Impl,sent2impl(Sent,Impl)),
  assertz(isent(Impl)),
  fail;true.
\end{code}

\begin{code}
sent2impl(Sent,R):-
  atomic_list_concat(Words,' ',Sent),
  list2impl(Words,R).
\end{code}

\subsubsection{Querying the database of implicational formulas}
The following query-answering predicates implement our
logic-based information retrieval mechanism. 

\begin{code}
qa(TextQuery,ISent):-
    string_lower(TextQuery,Query1),
    atom_string(QueryAtom,Query1),
    atomic_list_concat(Words,' ',QueryAtom),
    lqa(Words,ISent).
\end{code}

We can also query with a list of words some of which could be logic variables
standing for unknown intermediate next words.
\begin{code}
lqa(Words,ISent):-
    list2impl(Words,LeftImplQuery),
    iqa(LeftImplQuery,ISent).
\end{code}

\begin{code}
iqa(LeftImplQuery,ISent):-
    isent(ISent), 
    distinct(ISent,isufpref(ISent,LeftImplQuery)).
\end{code}

\begin{code}
qa(Query):-
    qa(Query,ISent),
    impl2list(ISent,Words),
    atomic_list_concat(Words,' ',Answer),
    writeq(Answer),nl,nl,
    fail;true.
\end{code}

\begin{code}
impl2list(E,Xs):-impl2list(E,Xs,[]).

impl2list(E->X)-->!,impl2list(E),[X].
impl2list(X)-->[X].
\end{code}

\BX
Querying on a toy database.
We create the database with the predicate {\tt store\_impls}.

\begin{code}
store_impls :-
  Sents=['the cat sits on the mat',
    'the dog sits on the log', 'the cat chases the mouse', 'the dog chases the cat'],
  store_impls(Sents),listing(isent/1).
\end{code}

Here is the result of querying it:
\begin{codex}
?- qa('the cat').
  'the cat sits on the mat'
  'the cat chases the mouse'
  'the dog chases the cat'

?- lqa([the,X,chases],R).
  X = cat, R = ((((the->cat)->chases)->the)->mouse) ;
  X = dog, R = ((((the->dog)->chases)->the)->cat) .
\end{codex}

Note also that in the last example we use logic variables for unknown query components.
\EX

\begin{codeh}
/*

?- iprove(((p->q)->r) -> (p->q->r)).
true.

?- lprove(((p->q->r)->(p->q)->p->r),SCombinator).
SCombinator = l(_A, l(_B, l(_C, a(a(_A, _C), a(_B, _C)))))

?- lprove(((p->q)->r) -> (p->q->r), LambdaExpr).
LambdaExpr = l(_A, l(_, l(_B, a(_A, l(_, l(_, _B)))))).

?- sent2impl('the cat sits on the mat',Impl).
Impl = (((((the->cat)->sits)->on)->the)->mat).

?- store_impls.
:- dynamic isent/1.

isent((((((the->cat)->sits)->on)->the)->mat)).
isent((((((the->dog)->sits)->on)->the)->log)).
isent(((((the->cat)->chases)->the)->mouse)).
isent(((((the->dog)->chases)->the)->cat)).

?- qa('the cat').
'the cat sits on the mat'

'the cat chases the mouse'

'the dog chases the cat'

?- lqa([the,X,chases],R).
X = cat,
R = ((((the->cat)->chases)->the)->mouse) ;
X = dog,
R = ((((the->dog)->chases)->the)->cat) .

*/
\end{codeh}


\section{Neural Realization of Implicational Composition}\label{neural}
\label{sec:neural-realisation}

This section describes a concrete neural realization of the left-nested
implicational semantics introduced earlier.
The goal is not to propose a novel neural architecture per se, but to
demonstrate that a well-defined proof-theoretic interpretation leads
naturally to a recurrent, operator-based model closely related to known
multiplicative RNN architectures.

\subsection{Hidden State as Proof Context}

The model maintains a hidden state
\[
h_t \in \mathbb{R}^d
\]
which represents the accumulated proof context after consuming a prefix
of tokens $x_1,\dots,x_t$.
From a Curry--Howard perspective, $h_t$ is a compact numeric encoding of
the assumptions and justifications constructed so far.

The initial state $h_0$ is a learned vector that seeds the computation at
the beginning of each sequence.

\subsection{Tokens as Implication Operators}

Each token $t$ is treated not as a feature vector to be added, but as an
\emph{operator} that transforms the current proof context.
Formally, each token induces a linear operator
\[
M_t : \mathbb{R}^d \rightarrow \mathbb{R}^d
\]
which is applied to the current state.

To make this feasible for large vocabularies, the operator is
parameterized in low-rank form:
\[
M_t = I + U \, \mathrm{diag}(s_t) \, V^\top,
\]
where:
\begin{itemize}
    \item $U, V \in \mathbb{R}^{d \times r}$ are shared trainable matrices,
    \item $s_t \in \mathbb{R}^r$ is a token-specific gate vector,
    \item $r \ll d$ is a small rank hyperparameter.
\end{itemize}

The gate vector $s_t$ is obtained by embedding lookup and passed through
a $\tanh$ nonlinearity to keep operator perturbations bounded.

This construction yields an operator family that is expressive,
parameter-efficient, and explicitly non-commutative.

\subsection{State Update as Modus Ponens}

The recurrent update implements implication elimination (modus ponens)
as operator application:
\[
h_{t+1}
=
\mathrm{LayerNorm}\!\left(
h_t
+
U\big((V^\top h_t) \odot s_t\big)
\right).
\]

This update can be read directly as:
\begin{quote}
    apply the implication introduced by token $t$ to the current proof
    context.
\end{quote}

Because matrix composition is non-commutative in general,
\[
M_a M_b \neq M_b M_a,
\]
the order of tokens is encoded intrinsically by the algebra of operator
composition.
No explicit positional encoding is required in the baseline model.

\subsection{Relation to Left-Nested Implication}

After consuming tokens $x_1,\dots,x_t$, the hidden state corresponds to
the left-nested implication
\[
((((x_1 \rightarrow x_2) \rightarrow x_3) \rightarrow \dots) \rightarrow x_t)
\]
applied to the initial seed $h_0$.

Each recurrence step corresponds to one implication elimination, in
direct correspondence with the proof-theoretic structure described in
subsection~\ref{depth}.

\subsection{Output Projection and Training Objective}

After each update, the model predicts the next token using a standard
linear readout:
\[
\mathrm{logits}_t = W_{\text{out}} h_t,
\]
where $W_{\text{out}} \in \mathbb{R}^{|\mathcal{V}| \times d}$ projects the
current proof context into vocabulary logits.

Training uses standard next-token cross-entropy with teacher forcing.
Importantly, while the \emph{internal representation of composition}
differs from Transformer models, the \emph{external training objective}
remains unchanged.
This allows direct comparison on the same datasets and metrics.

\subsection{Streaming Loss Computation}

Because the vocabulary may be large (e.g.\ BPE vocabularies with $\sim
10^5$ tokens), computing logits for all time steps simultaneously can be
prohibitively expensive.

Instead, the loss is computed in a streaming fashion:
\begin{itemize}
    \item at each time step, logits of shape $(\text{batch} \times
    |\mathcal{V}|)$ are materialized,
    \item cross-entropy is computed immediately,
    \item logits are discarded before the next step.
\end{itemize}

This reduces peak memory usage and enables training on commodity GPUs
(e.g., 8GB, 16GB or 24GB VRAM) or even m4 Mac Minis with 32GB shared RAM, without specialized kernels.

\subsection{Comparison with Transformer Computation}

A standard Transformer represents tokens as embeddings added into a
residual stream and relies on self-attention to retrieve and mix past
information.
Order is injected primarily through positional encodings or rotary
embeddings.

In contrast, the Arrow model:
\begin{itemize}
    \item represents tokens as state-transforming operators,
    \item encodes order through non-commutative composition,
    \item stores history implicitly in the evolving state rather than
    via content-addressable retrieval.
\end{itemize}

This shifts the inductive bias from retrieval to transformation.
A token influences future predictions not by being attended to, but by
having permanently modified the proof context.

\subsection{Interpretation and Extensions}

From a logic programming perspective, the Arrow model can be viewed as a
differentiable abstract machine in which each token is an instruction
that mutates a hidden store.
From a Curry--Howard perspective, it is a numeric realization of proof
construction by successive implication elimination.

While the baseline model is strictly recurrent and sequential, it can be
extended with lightweight causal mixers or attention layers without
abandoning the core implicational semantics.
Such hybrids are left for future work.

We refer to the code in \url{https://github.com/ptarau/nextword/blob/main/arrow.py} for
the details of our implementation for both training and inference.

\section{Experiments}\label{exper}

Our experiments are organized around a guiding hypothesis:
\emph{left-nested implicational structure provides an order-sensitive representation
    that supports both symbolic retrieval (Section~\ref{info}) and neural next-token
    prediction via operator composition}.
To make the link between the logic-level retrieval mechanism and the learned
neural mechanism maximally transparent, we use a deliberately overfitted training
regime that turns the training corpus into a ``memorized'' store of sentences and
sentence fragments. This mirrors the Prolog dynamic database of assumptions used
by the logic retrieval procedure.

\subsection{Sanity checking the model: via Overfitting on word tokens}

Attempting to overfit a small document or batch—often called a "Sanity Check"—is a common and effective indicator for evaluating a transformer (or similar) neural architecture choice.
While standard overfitting is typically viewed negatively, in this specific context, it serves as a diagnostic tool for the following:
\BE
\I
Verification of Learning Capacity in a
Proof of Concept: If a transformer cannot overfit (achieve near-zero training loss) on a tiny dataset, it indicates the architecture lacks the necessary capacity or has a structural flaw.
Information Flow: Success confirms that the input features contain enough information for the model to extract patterns and that the architecture can technically represent those patterns.
\I Debugging Optimization and
Initialization Issues: Large transformers often struggle with convergence on small datasets due to poor initialization rather than architectural weakness. Techniques like DT-Fixup can resolve these issues, allowing deep models to succeed even where they previously failed.
\I Normalization Check: It helps identify if specific layers, like Layer Normalization, are correctly handling the scale of the data.
\I Identifying Architectural Efficiency
Complexity vs. Size: Observing how fast a model converges on a small sample can reveal if the architecture is unnecessarily large or complex for the intended task.
\I Inductive Bias: Transformers have less "built-in" knowledge (inductive bias) than architectures like CNNs, making them naturally prone to overfitting small data. If a specific transformer variant overfits too easily without learning meaningful relationships, it may lack the structural constraints needed for your domain.
\EE

\subsection{Data acquisition and preprocessing}
\label{sec:data}

\paragraph{Gutenberg acquisition.}
We obtain public-domain ASCII texts from Project Gutenberg
(\url{https://gutenberg.org}) and treat each document as a single large corpus.
For each document, we automatically download the plain-text UTF-8 variant when
available, remove the standard Gutenberg header/footer boilerplate, and retain
only the main body. This yields large, clean corpora without licensing friction.

\paragraph{Sentence sets and normalization.}
Each document is converted into a set of sentences, with one sentence per line,
to support two complementary pipelines:
(i) conversion into implicational formulas for Prolog-based retrieval, and
(ii) tokenization for neural training and evaluation.
We apply lightweight normalization (lowercasing and punctuation cleanup) to
reduce accidental duplication caused by formatting artifacts.
The resulting sentences serve as the \emph{ground truth store} for both Prolog
and Python retrieval experiments.

\subsection{Training protocol: learning on sentences and fragments}
\label{sec:training-protocol}

\paragraph{Fragments as implicational subformulas.}
Section~\ref{info} shows that retrieval in the logic setting can be driven by a
query that is a \emph{suffix subformula} (or prefix of such a suffix) of a stored
sentence formula. In the left-nested encoding, a contiguous token span
corresponds to a syntactically identifiable substructure. To mimic this in the
neural setting, we augment the training set with \emph{all contiguous fragments}
of each sentence, i.e., all subsequences $x_i,\dots,x_j$ that appear contiguously
in a sentence $x_1,\dots,x_n$.

Concretely, for each sentence token sequence $(x_1,\dots,x_n)$ we include all
fragments $(x_i,\dots,x_j)$ for $1 \le i \le j \le n$.
We deduplicate fragments globally across the document to avoid bias from repeated
lines (common in large novels and dialogue-heavy texts).

\paragraph{Why deliberate overfitting is useful here.}
This training regime is intentionally biased toward memorization: it turns the
model into a learned scoring function over the space of fragments and their
continuations. This is not presented as a generalization benchmark but as a
controlled setting that isolates the relationship between:
\begin{itemize}
    \item the \emph{logic retrieval} mechanism (unification/provability over
    implicational encodings), and
    \item the \emph{neural retrieval} mechanism (subsequence matching followed by
    a learned continuation score).
\end{itemize}
In other words, we are testing whether the Arrow-style recurrence can
consistently encode enough order information to support the same kind of
query-to-suffix completion that is straightforward in the implicational logic
representation.

\paragraph{Objective and optimization.}
Training uses the standard next-token cross-entropy objective with teacher
forcing: given a fragment $(x_1,\dots,x_T)$, the model predicts $x_{t+1}$ from the
state obtained after processing $x_{\le t}$.
We use AdamW optimization with a warmup schedule and (when training on CUDA)
mixed precision (fp16/bf16 where supported) to improve throughput.
Because the output projection to the vocabulary dominates compute for large
vocabularies, we compute the loss in a \emph{streaming} fashion over time steps:
at each step we materialize logits of shape $\text{batch}\times|\mathcal{V}|$,
compute cross-entropy, and immediately discard logits to keep peak memory
manageable on commodity GPUs.

\subsection{Generation and retrieval evaluation}
\label{sec:generation-eval}

We evaluate two complementary modes.

\subsubsection{Symbolic retrieval baseline (Prolog)}
\label{sec:prolog-retrieval-eval}

In the Prolog pipeline, each sentence is stored as an implicational formula in a
dynamic database of assumptions, and queries are posed as subformulas (or prefixes
of suffix subformulas) of these stored formulas.
Retrieval succeeds when the query matches a stored sentence structure under the
intended implication encoding; this corresponds to proving or reconstructing the
completion using intuitionistic principles described earlier.
This provides a crisp, interpretable baseline: \emph{retrieval is exact and
    structure-driven}.

\subsubsection{Neural retrieval mimic (Python)}
\label{sec:python-retrieval-eval}

The Python implementation mimics the logic retrieval process in two stages:

\paragraph{(1) Candidate enumeration by subsequence match.}
Given a user query consisting of a short word sequence $q=(q_1,\dots,q_m)$, we
scan the stored sentences (the same sentence set used for training) and collect
all occurrences where $q$ appears as a contiguous subsequence. Each occurrence
induces a candidate \emph{suffix completion} (from the match position to the end
of the sentence). This mirrors the way a suffix subformula of a stored
implicational chain determines a unique completion in the symbolic setting.

\paragraph{(2) Ranking by learned continuation score.}
For each candidate suffix, we compute a log-likelihood score under the trained
Arrow model for the continuation \emph{after} the matched query. Intuitively,
this score measures how compatible the candidate completion is with the learned
operator dynamics induced by the query prefix.
Candidates are ranked by this score, and we return the top-$k$ completions.
To avoid redundant answers, we deduplicate candidates by their generated text:
multiple identical suffixes arising from different corpus locations are returned
only once.

This neural retrieval mode can be seen as a ``soft'' analogue of the Prolog
retrieval: instead of logical entailment/unification, we use learned conditional
probability to select the most plausible completion among exact substring matches.

\subsubsection{Free continuation (pure neural generation)}
\label{sec:free-generation}

When the query does not appear as a contiguous subsequence in the stored
sentences (or when retrieval is disabled), we fall back to standard neural
generation: we initialize the recurrent state using the prompt tokens and then
sample one token at a time from the output logits.
We use either greedy decoding or temperature-controlled sampling; in addition,
beam search is useful for short completions in the word-level setting.
This mode tests whether Arrow-style operator composition alone can produce
fluent continuations beyond memorized retrieval.

\subsection{Scaling-up on very large documents}
\label{sec:scalingup}

A key practical question is whether both the symbolic and neural components
remain usable on corpora substantially larger than ``toy'' examples.

\paragraph{Large-document pipeline.}
For large novels (e.g., Tolstoy's \emph{War and Peace} or Proust's
\emph{The Guermantes Way}), the pipeline remains the same:
download Gutenberg text, clean boilerplate, segment into sentences, and build
(i) a Prolog database of implicational encodings and (ii) a neural training set
of sentences and fragments.
The main scaling pressure comes from corpus size and vocabulary size, not from
any change in the underlying algorithms.

\paragraph{Performance considerations: Prolog vs.\ torch inference.}
Both approaches scale in a predictable way:
\begin{itemize}
    \item \textbf{Prolog retrieval} scales with the size of the dynamic database
    and the indexing strategy used for matching subformulas. Because queries are
    structural, retrieval is typically fast once the database is built, and can
    be tuned with indexing on leading functors/atoms.
    \item \textbf{Neural scoring} scales linearly in sequence length per candidate
    and benefits substantially from GPU acceleration. In the Arrow baseline, the
    per-step recurrence is inexpensive, and the dominant cost is the vocabulary
    projection. Mixed precision and streaming loss reduce memory pressure,
    allowing training and inference on a single commodity GPU.
\end{itemize}

\paragraph{Experimental plan for large-scale runs.}
For large documents we report:
(i) corpus statistics (number of sentences, vocabulary size, number of unique
fragments under deduplication),
(ii) training throughput (tokens/sec) and peak memory usage,
and (iii) retrieval quality measured by whether a query prefix returns the
intended sentence completions.
We also contrast retrieval-first generation with free generation to separate
``memorized completion'' behavior from genuinely compositional continuation.

\paragraph{Summary.}
Overall, these experiments (with details shown in the next subsection) treat large documents as a stress test for the core
claim: the implicational encoding yields a natural notion of order-sensitive
substructure that supports both logic-level retrieval and a closely related
learned retrieval/scoring mechanism in the Arrow model.

\subsection{Scaling and Runtime Measurements}
\label{sec:runtime}

All experiments in this section use a maximum sentence length of $256$ words and an
augmentation regime that includes all contiguous fragments up to length $5$ (globally deduplicated).
This mirrors the logic-level setting of Section~\ref{info}, where queries correspond to subformulae
of stored implicational chains, and it matches the Python retrieval pipeline, where a query is a
contiguous subsequence whose best suffix completions are ranked by a learned continuation score.

\begin{table}[t]
    \centering
    \small
    \begin{tabular}{lrr}
        \toprule
        \textbf{Document} & \textbf{Training time} & \textbf{Notes} \\
        \midrule
        \texttt{the\_eyes}         & 6m 47.355s & 256-word sentences, fragments $\le 5$ \\
        \texttt{war\_and\_peace}   & 7m 52.297s & same settings \\
        \texttt{ulysses}          & 9m 12.960s & same settings \\
        \texttt{guermantes}       & 6m 43.017s & same settings \\
        \texttt{wizard\_of\_oz}    & 4m 20.580s & same settings \\
        \texttt{moby\_dick}        & 6m 47.355s & same settings \\
        \texttt{dracula}          & 5m 20.936s & same settings \\
        \texttt{cthulhu}          & 4m 43.505s & same settings \\
        \texttt{crystal}          & 5m 54.452s & same settings \\
        \bottomrule
    \end{tabular}
    \caption{End-to-end training runtimes on several Gutenberg documents under a fixed preprocessing
        and augmentation protocol: maximum sentence length $256$ words and maximum fragment length $5$
   on a Linux with 32GB of RAM with an RTX 3090 GPU with 24GB of VRAM.}  \label{tab:train-times}
\end{table}

\paragraph{Training throughput and scaling.}
Under the above fixed constraints, training completes in a few minutes per document
(Table~\ref{tab:train-times}).
The variation across texts is primarily driven by (i) the number of extracted sentences,
(ii) the number of unique fragments induced by each text after global deduplication,
and (iii) vocabulary size effects on the output projection and cross-entropy computation.
Importantly, the protocol is stable across a range of document sizes, including long novels,
suggesting that both the symbolic and neural pipelines can be scaled to book-length corpora.
To reduce RAM memory consumption from quadratic to linear we also limit the length of the
subsequence fragments to a small k, with k=5 in our experiments. This assumes that the queries
will be no longer then k words, which should be good enough for most exact subsequence queries.

\paragraph{Inference-time behavior.}
Inference in the retrieval-first Python pipeline (candidate enumeration by subsequence match
followed by continuation scoring) takes approximately $0.1$--$0.3$ seconds per query, and was observed
to be largely independent of the overall file size in these experiments.
This is consistent with the fact that, once candidate suffixes are identified, scoring requires only
short sequential evaluation over the candidate continuation (bounded by the maximum sentence length),
rather than processing the entire document.

\paragraph{Prolog database construction and query time.}
In the Prolog baseline, loading the dynamic database of sentence formulas from a preprocessed
\texttt{sent\_file} takes approximately $1$--$2$ seconds, while typical queries complete in roughly
$0.5$ seconds.
This supports the feasibility of using logic-level retrieval as an interpretable baseline even for
large documents: database initialization is fast, and query evaluation remains interactive.

\paragraph{Summary.}
Overall, these measurements indicate that (i) the fragment-augmented training setup is practical at
\url{https://gutenberg.org} very large novels scale, (ii) retrieval-style inference remains interactive, and (iii) Prolog-based
formula indexing provides a lightweight and interpretable retrieval baseline complementary to the
learned ranking/scoring mechanism.

\section{Discussion}\label{disc}
\paragraph{Limitations.}
The present work deliberately focuses on settings where the query/prefix is an \emph{exact} contiguous
substructure of a stored sentence (or fragment), so that both the Prolog retrieval procedure and the
Python mimic operate under exact matching assumptions. While this controlled regime is useful for
making the implicational semantics explicit, it does not address the robustness requirements of
modern language modeling, such as prompts containing misspellings, omissions, paraphrases, or
non-contiguous evidence.

\paragraph{Future work directions: towards a proof-theoretic account of transformers}
A central open challenge is to emulate and explain transformer-style inference using similar
logic-based techniques. In particular, transformers can perform next-token prediction even when the
prompt is noisy or partially specified; this suggests an inference mechanism that behaves like
\emph{approximate} matching and \emph{soft} hypothesis selection rather than exact unification.

\paragraph{Attention as approximate implication elimination.}
A useful starting point is to interpret attention as a family of conditional rules of the form
\[
(E \to w_{\text{before}}) \to w_{\text{after}},
\]
where $E$ denotes a (possibly distributed) context extracted from the prefix. This resembles a
modus-ponens pattern:
\[
E \to w_{\text{before}}
\qquad
(E \to w_{\text{before}}) \to w_{\text{after}}
\;\;\Rightarrow\;\;
w_{\text{after}}.
\]
In this reading, the model maintains a context-sensitive implication $E \to w_{\text{before}}$,
and uses learned implicational links to predict $w_{\text{after}}$.

\paragraph{Query--Key matching as implication selection.}
In standard attention, a query vector $q$ selects keys $k_i$ via a similarity score, and returns a
weighted combination of values $v_i$. A proof-theoretic analogue would treat the memory as a set of
implicational clauses indexed by keys:
\[
\{\, k_i \Rightarrow v_i \,\}_{i=1}^m,
\]
and interpret querying as selecting (softly) which implication instances are applicable. Informally,
one may view a single head as computing something akin to:
\[
(q \Rightarrow k_i) \Rightarrow v_i,
\]
i.e., \emph{if the query establishes (approximately) the key, then the associated value follows}.
The attention softmax then becomes a differentiable surrogate for choosing which implication instances
to apply.

\paragraph{Matrix view: lower-triangular implication structure.}
Causal self-attention enforces a strict lower-triangular dependence structure: token $t$ may depend
only on positions $< t$. This can be seen as a time-indexed family of implication links that only
flow forward. If one restricts attention weights to $\{0,1\}$, the attention matrix becomes a
lower-triangular adjacency matrix of a proof graph, where an edge $i \to t$ indicates that evidence
at position $i$ participates in justifying the prediction at $t$. Relaxing to real-valued weights
yields a \emph{weighted} proof graph, suggesting connections to proof nets and probabilistic proof
systems.

\paragraph{Feed-forward blocks as implicational pipelines.}
Transformer MLP blocks can be interpreted as learned compositions of transformations acting on
residual states. A logic-inspired view is to treat them as implicational pipelines
\[
p_1 \to p_2 \to \cdots \to p_n \to \textit{out},
\]
where intermediate propositions $p_i$ correspond to latent predicates/features computed by the MLP.
While this is schematic, it suggests that MLP layers may be expressible as structured sequences of
implication introductions/eliminations over latent vocabularies.

\paragraph{Research direction.}
The overarching goal is to develop a constructive account of {\em transformer inference} in which:
(i) keys index reusable implicational clauses,
(ii) queries establish applicability of those clauses in a graded fashion,
(iii) values represent the resulting derived facts, and
(iv) multi-head composition corresponds to parallel (but interacting) proof steps.
Such a view would generalize the exact, left-nested implicational retrieval studied in this paper
to an approximate and distributed setting capable of handling noisy prompts.

\section{Related Work}\label{rel}

\subsection{Transformers and Positional Mechanisms}

Transformers \cite{transfo} represent sequences using content-based interactions computed by
scaled dot-product attention, followed by weighted averaging of value vectors.
In its basic form, the attention kernel depends on pairwise similarity between token-derived
queries and keys, and is therefore not itself an order-sensitive algebraic operation:
without additional structure, attention is permutation-invariant with respect to the input positions.
Consequently, Transformers inject order through explicit positional mechanisms, including
absolute position encodings \cite{transfo},
relative-position representations \cite{shaw2018relative},
and long-context schemes combining recurrence and position handling (e.g., Transformer-XL \cite{daixl2019}).

A large body of work refines positional structure while keeping attention as the core interaction.
Rotary Position Embeddings (RoPE) \cite{su2021roformer} incorporate relative offsets by rotating
query/key representations; ALiBi \cite{press2021alibi} biases attention logits as a linear function
of distance, improving length extrapolation.
These methods emphasize that, in the transformer family, order is primarily enforced by
positional augmentation of an otherwise similarity-driven (dot-product) interaction.

Arrow models depart from this design choice: rather than adding positional information to a
similarity operator, they encode order directly via non-commutative composition.
A prefix is interpreted as a left-nested implicational chain acting on a state,
so the model's sequential backbone is intrinsically order-sensitive.
In this sense, Arrow shifts the inductive bias from \emph{positionalized similarity}
to \emph{operator composition}.

\subsection{State-Space Models, Selective SSMs, and Long Convolutions}

State-space sequence models (SSMs) provide an alternative to attention in which long-range
dependencies arise from structured linear dynamics.
S4 \cite{s4gu2021} and follow-on work show how to parameterize and compute SSMs efficiently,
achieving strong performance on long-range benchmarks while avoiding quadratic attention costs.
Mamba \cite{gu2024mambalineartimesequencemodeling} 
introduces \emph{selective} (input-conditioned) state-space updates that improve
modeling of discrete, information-dense data such as language, and scales linearly in sequence length.
Recent analyses further connect attention and SSM computation via structured matrix views
(\emph{structured state space duality}) \cite{daogu2024ssmduality}, clarifying when attention-like
effects can emerge from SSM-style operators.

In parallel, long-convolution architectures such as Hyena \cite{poli2023hyena} replace attention
with implicit long filters and data-controlled gating, offering another sub-quadratic route to
long-context modeling.

Arrow differs from classical SSMs in that its recurrence is explicitly \emph{token-dependent} and
operator-valued, rather than governed by a fixed global transition/kernel.
This places Arrow closer to the family of input-conditioned recurrent models, while sharing the
SSM motivation of making long-context computation efficient and structurally grounded.

\subsection{Multiplicative and Second-Order RNNs; Hypernetworks}

The Arrow update is mathematically closest to recurrent models in which the input chooses a
state transition operator.
Sutskever, Martens, and Hinton \cite{suts11} introduce a multiplicative RNN for language modeling in which
the current input modulates the hidden-to-hidden transformation via a low-rank factorization,
and show strong results when combined with Hessian-free optimization; see also Martens and Sutskever
for the Hessian-free training method \cite{martens2011hf}.
This line of work can be read as learning a \emph{library of basis transformations} whose mixture is
selected by the current symbol, closely mirroring Arrow's shared-basis, token-gated operator form.

Related ideas appear in Multiplicative Integration \cite{wu2016mi}, which improves gradient flow by
multiplicatively coupling different information sources inside RNN blocks, and in HyperNetworks
\cite{ha2016hypernetworks}, where one network generates (or modulates) the weights of another.
All these models use multiplicative structure to obtain input-conditioned transformations.

The conceptual difference is that Arrow is derived from an intuitionistic-implicational reading of
prefixes (tokens as operators under a Curry--Howard semantics), rather than introduced primarily as a
gating or interaction mechanism.
This derivation suggests a proof-theoretic interpretation of learned transitions:
the model state summarizes a constructive context, and next-token prediction corresponds to
a form of learned modus-ponens-like continuation in the embedded operator algebra.

\subsection{Recursive and Tool-Using Inference for Long Contexts}

Zhang, Kraska, and Khattab propose \emph{Recursive Language Models} (RLMs), an inference-time strategy
for handling prompts far longer than the model’s context window by treating the prompt as an external
environment that the model can examine, decompose, and recursively re-query \cite{zhang2025rlm}.
Rather than modifying the base model architecture, RLMs provide a structured control flow that
iteratively selects relevant snippets and composes partial results, achieving strong performance on
long-context tasks at comparable or lower cost per query. Conceptually, this line of work is aligned
with our use of structured compositions of inference steps: it emphasizes that \emph{explicit
    program-like control} (recursion, decomposition, self-calls) can be as important as the internal
parametric mechanism for realizing deep, multi-step reasoning. :contentReference[oaicite:0]{index=0}
In \cite{tarau2023automation}  a logic-programming inspired method for \emph{automating goal-driven LLM dialog
    threads} by organizing reasoning as a bounded-depth recursive exploration of OR-alternatives and AND-expansions,
explicitly derived from Horn-clause (SLD-resolution-like) control.
The method synthesizes intermediate prompts summarizing the evolving trace, uses similarity/oracle
checks to prune and validate steps, and aggregates results via a generated Horn program’s minimal
model. This work complements our approach by showing how LP control principles can structure and
validate LLM inference externally, whereas our present paper focuses on how implicational structure
itself motivates an internal, compositional view of next-token prediction and multi-token continuation.

\subsection{Logic Programming and Neuro-Symbolic works related to LLM Reasoning}
A growing line of neuro-symbolic work treats an LLM primarily as a \emph{semantic parser} that maps natural language into a formal language (FOL, ASP, Prolog), and then delegates multi-step reasoning to a symbolic engine with explicit proof traces.
LINC \cite{olausson2023linc} converts premises and hypotheses to first-order logic and discharges validity with an external theorem prover, yielding improved reliability and interpretability on logical reasoning benchmarks.
Similarly, Yang et al.\ \cite{yang2023coupling} couple few-shot LLM parsing with reusable Answer Set Programming (ASP) knowledge modules to obtain robust reasoning across multiple QA tasks without retraining, emphasizing the modularity and inspectability of the symbolic component.
In \cite{flops24} deep step-by step reasoning in an LLM dialog thread is automated by recursively exploring alternatives (OR-nodes) and expanding details (AND-nodes) up to a given depth.
The algorithm is derived from a simple recursive descent implementation of a Horn Clause interpreter. Semantic similarity to ground-truth facts or oracle advice from another LLM instance is used to restrict the search space and validate the traces of justification steps returned as focused answers.
In the ASP direction, Ishay et al.\ \cite{ishay2023asp} show that LLMs can generate nontrivial ASP encodings for logic puzzles (often with simple, human-correctable errors), enabling a workflow where correctness is enforced by a solver rather than implicit pattern completion.

More recent logic-programming work pushes this integration further in two complementary ways.
First, LLM2LAS \cite{santana2025llm2las} combines LLM-based extraction of semantic structure with \emph{rule induction} (ILASP/LAS), learning interpretable rules that are then executed by an ASP solver for story-based question answering.
Second, Prolog-oriented frameworks such as LoRP \cite{di2025lorp} translate natural language into (extended) Prolog and use an external Prolog interpreter for verifiable inference, often adding validation and voting to increase robustness under long inference depth.
Finally, solver-guided training and decoding is emerging as a systematic mechanism for reducing DSL/program-generation errors: In \cite{schrader2025solverloop} they use an ASP \emph{solver-in-the-loop} feedback to label partial continuations as accepted/rejected and fine-tune the LLM accordingly.

These approaches are complementary to Arrow: rather than outsourcing deduction to an external prover/solver, Arrow aims to \emph{internalize} an implication-shaped constructive update in the neural dynamics, while retaining a semantic link to proof-theoretic structure.

\section{Conclusion}\label{conc}

We presented the Arrow Language Model, grounded in \emph{left-nested} intuitionistic implication.
A formal correspondence links implication depth to recurrent computation, explaining why order
is intrinsic to the architecture. This yields a logic-first foundation for next-token prediction
distinct from similarity-based attention and from convolution/state-space alternatives.
For the reader curious to explore independently our open source code, a quick way to 
start are the simple commands shown at \url{https://github.com/ptarau/nextword/blob/main/README.md} .

\bibliographystyle{tlplike}

\bibliography{tarau,llm,ml,proglang, theory}

\end{document}